\DeclareMathAlphabet{\mathcal}{OMS}{cmsy}{m}{n}
\newcommand{\R}{\mathbb{R}}
\DeclarePairedDelimiter{\norm}{\lVert}{\rVert}
\DeclarePairedDelimiter{\abs}{\lvert}{\rvert}%
\newtheorem{proposition}{\bf Proposition}
\mathchardef\mhyphen="2D
\mathchardef\mslash ="202F
\DeclareSymbolFont{bfletters}{OT1}{cmr}{bx}{n}
\DeclareSymbolFontAlphabet{\mathbf}{bfletters}
\DeclareMathOperator*{\argmax}{arg\,max}
\DeclareMathSymbol{A}{\mathalpha}{bfletters}{`A}
\DeclareMathSymbol{B}{\mathalpha}{bfletters}{`B}
\DeclareMathSymbol{C}{\mathalpha}{bfletters}{`C}
\DeclareMathSymbol{D}{\mathalpha}{bfletters}{`D}
\DeclareMathSymbol{E}{\mathalpha}{bfletters}{`E}
\DeclareMathSymbol{F}{\mathalpha}{bfletters}{`F}
\DeclareMathSymbol{G}{\mathalpha}{bfletters}{`G}
\DeclareMathSymbol{H}{\mathalpha}{bfletters}{`H}
\DeclareMathSymbol{I}{\mathalpha}{bfletters}{`I}
\DeclareMathSymbol{J}{\mathalpha}{bfletters}{`J}
\DeclareMathSymbol{K}{\mathalpha}{bfletters}{`K}
\DeclareMathSymbol{L}{\mathalpha}{bfletters}{`L}
\DeclareMathSymbol{M}{\mathalpha}{bfletters}{`M}
\DeclareMathSymbol{N}{\mathalpha}{bfletters}{`N}
\DeclareMathSymbol{O}{\mathalpha}{bfletters}{`O}
\DeclareMathSymbol{P}{\mathalpha}{bfletters}{`P}
\DeclareMathSymbol{Q}{\mathalpha}{bfletters}{`Q}
\DeclareMathSymbol{R}{\mathalpha}{bfletters}{`R}
\DeclareMathSymbol{S}{\mathalpha}{bfletters}{`S}
\DeclareMathSymbol{T}{\mathalpha}{bfletters}{`T}
\DeclareMathSymbol{U}{\mathalpha}{bfletters}{`U}
\DeclareMathSymbol{V}{\mathalpha}{bfletters}{`V}
\DeclareMathSymbol{W}{\mathalpha}{bfletters}{`W}
\DeclareMathSymbol{X}{\mathalpha}{bfletters}{`X}
\DeclareMathSymbol{Y}{\mathalpha}{bfletters}{`Y}
\DeclareMathSymbol{Z}{\mathalpha}{bfletters}{`Z}
\DeclareMathSymbol{a}{\mathalpha}{bfletters}{`a}
\DeclareMathSymbol{b}{\mathalpha}{bfletters}{`b}
\DeclareMathSymbol{c}{\mathalpha}{bfletters}{`c}
\DeclareMathSymbol{d}{\mathalpha}{bfletters}{`d}
\DeclareMathSymbol{e}{\mathalpha}{bfletters}{`e}
\DeclareMathSymbol{f}{\mathalpha}{bfletters}{`f}
\DeclareMathSymbol{g}{\mathalpha}{bfletters}{`g}
\DeclareMathSymbol{h}{\mathalpha}{bfletters}{`h}
\DeclareMathSymbol{i}{\mathalpha}{bfletters}{`i}
\DeclareMathSymbol{j}{\mathalpha}{bfletters}{`j}
\DeclareMathSymbol{k}{\mathalpha}{bfletters}{`k}
\DeclareMathSymbol{l}{\mathalpha}{bfletters}{`l}
\DeclareMathSymbol{m}{\mathalpha}{bfletters}{`m}
\DeclareMathSymbol{n}{\mathalpha}{bfletters}{`n}
\DeclareMathSymbol{o}{\mathalpha}{bfletters}{`o}
\DeclareMathSymbol{p}{\mathalpha}{bfletters}{`p}
\DeclareMathSymbol{q}{\mathalpha}{bfletters}{`q}
\DeclareMathSymbol{r}{\mathalpha}{bfletters}{`r}
\DeclareMathSymbol{s}{\mathalpha}{bfletters}{`s}
\DeclareMathSymbol{t}{\mathalpha}{bfletters}{`t}
\DeclareMathSymbol{u}{\mathalpha}{bfletters}{`u}
\DeclareMathSymbol{v}{\mathalpha}{bfletters}{`v}
\DeclareMathSymbol{w}{\mathalpha}{bfletters}{`w}
\DeclareMathSymbol{x}{\mathalpha}{bfletters}{`x}
\DeclareMathSymbol{y}{\mathalpha}{bfletters}{`y}
\DeclareMathSymbol{z}{\mathalpha}{bfletters}{`z}
\let\Algorithm\algorithm
\renewcommand\algorithm[1][]{\Algorithm[#1]\setstretch{1.15}}
\title{\LARGE \bf A Barrier Pair Method for Safe Human-Robot Shared Autonomy}
\author{Binghan He$^{1}$, Mahsa Ghasemi, Ufuk Topcu and Luis Sentis
\thanks{
This work was supported by the National Science Foundation \texttt{\small [grant number 1652113]} and \texttt{\small [grant number 1836900]}.
The authors are with the Department of Mechanical Engineering \texttt{\small(B.H.)}, the Department of Electrical and Computer Engineering \texttt{\small(M.G.)} and the Department of Aerospace Engineering and Engineering Mechanics \texttt{\small(U.T., L.S.)}, The University of Texas at Austin, Austin, TX.
Send correspondence to $^{1}$$\;${\tt\small binghan at utexas dot edu}.
}
}
\newcommand\copyrighttext{%
  \scriptsize 
  Accepted for publication in IEEE Conference on Decision and Control (CDC)
  \textcopyright 2021 IEEE. Personal use of this material is permitted. Permission from IEEE must be obtained for all other uses, in any current or future media, including reprinting/republishing this material for advertising or promotional purposes, creating new collective works, for resale or redistribution to servers or lists, or reuse of any copyrighted component of this work in other works.
  }
\newcommand\copyrightnotice{%
\begin{tikzpicture}[remember picture,overlay]
\node[anchor=south,yshift=10pt] at (current page.south)
{\fbox{\parbox{\dimexpr\textwidth-\fboxsep-\fboxrule\relax}{\copyrighttext}}};
\end{tikzpicture}%
}
\begin{document}

\maketitle
\thispagestyle{empty}
\pagestyle{empty}
\copyrightnotice
\vspace{-10pt}

\begin{abstract}
Shared autonomy provides a framework where a human and an automated system, such as a robot, jointly control the system's behavior, enabling an effective solution for various applications, including human-robot interaction.
However, a challenging problem in shared autonomy is safety because the human input may be unknown and unpredictable, which affects the robot's safety constraints. 
If the human input is a force applied through physical contact with the robot, it also alters the robot’s behavior to maintain safety. 
We address the safety issue of shared autonomy in real-time applications by proposing a two-layer control framework.
In the first layer, we use the history of human input measurements to infer what the human wants the robot to do and define the robot's safety constraints according to that inference.
In the second layer, we formulate a rapidly-exploring random tree of barrier pairs, with each barrier pair composed of a barrier function and a controller. 
Using the controllers in these barrier pairs, the robot is able to maintain its safe operation under the intervention from the human input.
This proposed control framework allows the robot to assist the human while preventing them from encountering safety issues.
We demonstrate the proposed control framework on a simulation of a two-linkage manipulator robot.
\end{abstract}

\section{Introduction}

Unlike full robot autonomy, shared autonomy allows a robot to leverage the perceptual and decision making capabilities of operators while helping them to work more efficiently and accurately \cite{colgate2008safety}. 
Across different fields, such as brain-computer interfaces \cite{carlson2013brain}, autonomous driving \cite{fridman2018human}, and teleoperation \cite{javdani2018shared}, shared autonomy helps us to improve our productivity without completely removing the human from the task at hand.
However, safety becomes critical with shared autonomy, especially when operators and robots interact through physical contact.
On the one hand, the human's objective is not directly measurable but can be inferred based on the robot's sensing of human inputs such as contact forces.
The robot needs this inference of the human's objective to figure out how to assist the human and prevent them from potential accidents.
On the other hand, human inputs can alter the robot’s current path resulting in additional safety concerns.
Therefore in a shared autonomy task, the robot faces a conflict between inferring the human's objectives and maintaining safety under the interaction with human inputs.

For a nonlinear dynamical system such as a robot, safety is usually verified through barrier functions \cite{prajna2004safety}.
Just like Lyapunov functions for stability verification, barrier functions provide sufficient conditions for safety verification.
But barrier functions relax the global convergence requirement of Lyapunov functions and only need to be decreasing at the safety bounds.
Various methods create barrier functions with controllers to enforce safety constraint satisfaction.
For state-space constraints, controllers can be synthesized simultaneously with barrier functions using back-stepping \cite{tee2009barrier} or quadratic programming methods \cite{ames2016control, nguyen2016optimal, nguyen2016exponential}.
Input constraints can also be enforced using semi-definite programming methods \cite{pylorof2016analysis, thomas2018safety}.
By incorporating sampling-based methods into the synthesis of barrier functions and controllers \cite{he2020bp}, robots can also guarantee safe operation with non-convex state-space constraints. 

While the above methods aim to resolve the safety problem for a robot alone, it is a more challenging problem to guarantee safety for a robot that has physical contact with a person. 
This is because humans represent an uncertain dynamical sub-system when physically interacting with robots. The internal states of the human dynamics are usually immeasurable. 
Robust force-based controllers \cite{buerger2007complementary, he2019modeling} address the uncertain human dynamics and achieve complementary stability for human-robot coupled system.
In addition, they can be implemented as output feedback controllers, which bypass the internal human states. 
However, force-based control strategies only consider the robot as a strict follower of the human's trajectory and hence, relies on the human to obey safety constraints.
In human-robot shared autonomy, the robot needs to enforce safety constraints in relation to the human's objective such that it can prevent them from potential accidents.

In this paper, we address the safety problem during shared autonomy using a two-layer control framework.
In the first layer, we define the robot's safety constraints based on the inference of the human's objective. 
In order to understand the human's objective, we propose an intent inference method that integrates the history of human's input, obtained through the sensor readings, to generate a probability distribution over a set of candidate objectives. In particular, the proposed method employs a Boltzmann model of rationality~\cite{baker2007goal,morgenstern1953theory} to characterize the probability of receiving a particular sensor reading at the robot's end-effector based on the human's objective. 
Using this Boltzmann model, the robot keeps track of a probability distribution, called belief, over the candidate objectives and updates it using a Bayesian method. 
Then, the belief is handed to a safety controller in the second layer so that the robot can safely move toward the most probable human objective.

In the second layer, we use the barrier pair rapidly-exploring random tree method \cite{he2020bp} to generate sequences of barrier pairs given different human objectives. 
Each barrier pair comprises a quadratic barrier function and a state feedback controller. 
We synthesize the state feedback controllers in these barrier pairs using a robust control strategy so that the robot can satisfy the safety constraints for different human objectives and reject the human input interventions.
Based on the human intention inference formulated from the first layer, the robot can execute these barrier pair sequences accordingly and help the human to safely accomplish the objective. 
We demonstrate this two-layer control framework on a simulation of a two-linkage manipulator robot, where a human operator uses a keyboard to control a simulated human force exerted on the end-effector of the manipulator robot.

\section{Preliminaries}

In this section, we first overview the basics of multi-body robot dynamics and barrier pair rapidly-exploring random trees. 
Then, we present the formal problem statement.
$\mathsf{a}_\mathrm{i}$ is defined as a polytopic region in the workspace of a robot.
For convenience, $x_\mathsf{a_\mathrm{i}}$ is defined as the geometric center for the region of $\mathsf{a}_\mathrm{i}$ and $\bar{\mathsf{a}}_\mathrm{i} \triangleq \R ^ {\mathrm{n}} \smallsetminus \mathsf{a}_\mathrm{i}$ is defined as a workspace region excluding the set for $\mathsf{a}_\mathrm{i}$.

\subsection{Multi-Body Robot Dynamics}

The Lagrangian dynamics of an n-DOF robot can be expressed as
\begin{equation} \label{eq:lg}
M(q) \cdot \ddot{q} + C(q, \, \dot{q}) \cdot \dot{q} = u + J ^ \top (q) \cdot w
\end{equation}
where $M(q)$ is the matrix of inertia, $C(q, \, \dot{q})$ is the coefficient matrix of Coriolis and centrifugal effects, $J (q)$ is the matrix of Jacobian, $q \triangleq [ \mathit{q}_1, \, \cdots, \, \mathit{q}_\mathrm{n} ] ^ \top$ is the vector of joint positions with $\dot{q}$ and $\ddot{q}$ defined as its first and second order time derivatives, $u \triangleq [ \mathit{u}_1, \, \cdots, \, \mathit{u}_\mathrm{n} ] ^ \top$ is the vector of joint torques and $w \triangleq [ \mathit{w}_1, \, \cdots, \, \mathit{w}_\mathrm{n} ] ^ \top$ is the vector of external forces exerted by the human.
An n-dimensional workspace position vector $x \triangleq [ \mathit{x}_1, \, \cdots, \, \mathit{x}_\mathrm{n} ] ^ \top$ can be calculated from the joint position vector using
\begin{equation} \label{eq:forward}
x = F (q)
\end{equation}
where $F (\cdot)$ represents the forward kinematics.
By linearizing \eqref{eq:lg} and \eqref{eq:forward} around an equilibrium point $[ q_e ^ \top, \, \vec{0} ^ {\, \top}] ^ \top$, we obtain the state-space form
\begin{align}
\begin{bmatrix}
\dot{\tilde{q}} \\
\ddot{\tilde{q}}
\end{bmatrix}
& = 
\begin{bmatrix}
\mathbf{0} & \mathbf{I} \\
\mathbf{0} & - M ^ {-1} (q_e) \cdot C (q_e, \, \vec{0} )
\end{bmatrix}
\begin{bmatrix}
\tilde{q} \\
\dot{\tilde{q}}
\end{bmatrix}
+
\begin{bmatrix}
\mathbf{0} \\
M ^ {-1} (q_e)
\end{bmatrix}
u \notag  \\
& +
\begin{bmatrix}
\mathbf{0} \\
M ^ {-1} (q_e) \cdot J ^ \top (q_e)
\end{bmatrix}
w \label{eq:ss-lg} \\
\tilde{x} \,
& = 
\begin{bmatrix}
J (q_e) & \mathbf{0}
\end{bmatrix}
\begin{bmatrix}
\tilde{q} \\
\dot{\tilde{q}}
\end{bmatrix} \label{eq:ss-jacobian}
\end{align}
where $\tilde{q} \triangleq q - q_e$ and $\tilde{x} \triangleq x - x_e$ with $x_e = F (q_e)$. The partial derivative of $F (q)$ with respect to $q$ is the Jacobian matrix $J (q)$.

\subsection{Barrier Pair Rapidly-Exploring Random Trees}

{\bf Definition 1} \cite{thomas2018safety}:
A \emph{barrier pair} is a pair consisting of a barrier function and a controller $(B,\ k)$ with the following properties
\begin{itemize}
\item[(a)] $-1<B(\tilde{q}, \, \dot{\tilde{q}})\leq 0, u = k (\tilde{q}, \, \dot{\tilde{q}}) \implies \dot B(\tilde{q}, \, \dot{\tilde{q}}) < 0$,
\vspace{3pt}
\item[(b)] $B(\tilde{q}, \, \dot{\tilde{q}})\leq 0 \implies \mathbf{[ \tilde{q} ^ \top, \, \dot{\tilde{q}} ^ \top] ^ \top} \in \mathsf{Z},\ k (\tilde{q}, \, \dot{\tilde{q}}) \in \mathsf{U}$,
\vspace{1pt}
\end{itemize}
where $[ \tilde{q} ^ \top, \, \dot{\tilde{q}} ^ \top] ^ \top \in \mathsf{Z}$ and $u \in \mathsf{U}$ are the state and input constraints. 

If we define the barrier pair as 
\begin{equation} \label{eq:bp}
B = 
\begin{bmatrix}
\tilde{q} \\
\dot{\tilde{q}}
\end{bmatrix} 
^ \top
\mkern-14mu
Q^{-1} 
\mkern-6mu
\begin{bmatrix}
\tilde{q} \\
\dot{\tilde{q}}
\end{bmatrix}
- 1, \quad k = K \begin{bmatrix}
\tilde{q} \\
\dot{\tilde{q}}
\end{bmatrix}
\end{equation}
where $B$ is a quadratic barrier function with a positive definite matrix $Q$ and $k$ is a full state feedback controller,
the barrier pair synthesis becomes a linear matrix inequality ($\mathsf{LMI}$) optimization problem \cite{thomas2018safety}. 

For convenience, we use $(Q, K)$ to represent a barrier pair $(B, k)$ in the form of \eqref{eq:bp} and define $E (\upvarepsilon) \triangleq \{ [ \tilde{q} ^ \top, \, \dot{\tilde{q}} ^ \top] ^ \top \mid [ \tilde{q} ^ \top, \, \dot{\tilde{q}} ^ \top] Q ^ {-1} [ \tilde{q} ^ \top, \, \dot{\tilde{q}} ^ \top] ^ \top \leq \upvarepsilon ^ 2 \}$ as the sub-level set of $B$ corresponding to a value $\upvarepsilon ^ 2 - 1$. Based on Definition~1, the zero sub-level set $E (1)$ of the barrier function $B$ needs to satisfy all constraints defined by $\mathsf{Z}$ and $\mathsf{U}$.


\begin{algorithm}[t]
\caption{$G \leftarrow \texttt{BPRRT}(\mathsf{a}_{\mathsf{0}}, \mathsf{a}_{\mathsf{f}}, \bar{\mathsf{a}}_{\mathsf{1}}, \cdots, \bar{\mathsf{a}}_{\mathrm{n_o}}, \mathsf{Z_0}, \mathsf{U}, \upvarepsilon)$} \label{code:BP-RRT-1}
\begin{algorithmic} [1]
\REQUIRE Initial region $\mathsf{a}_{\mathsf{0}}$, goal region $\mathsf{a}_{\mathsf{f}}$, constraints associated with undesirable regions $\bar{\mathsf{a}}_{\mathsf{1}}, \, \cdots, \, \bar{\mathsf{a}}_{\mathrm{n_o}}$, state space constraint $\mathsf{Z_0}$, input constraint $\mathsf{U}$, scalar $\upvarepsilon$ $(0 < \upvarepsilon \leq 1)$
\ENSURE $\mathsf{BP}$-$\mathsf{RRT}$ graph $G$
\STATE $(Q_{\mathsf{f}}, \, K_{\mathsf{f}}) \leftarrow \texttt{BP} (x_{\mathsf{a}_\mathsf{f}}, \, \mathsf{a}_{\mathsf{f}}, \, \bar{\mathsf{a}}_{\mathsf{1}}, \, \cdots, \, \bar{\mathsf{a}}_{\mathrm{n_o}}, \, \mathsf{Z_0}, \, \mathsf{U})$
\STATE $G.\texttt{AddVertex}(x_{\mathsf{f}}), \, G.\texttt{AddBP}((Q_{\mathsf{f}}, \, K_{\mathsf{f}}))$
\STATE $(Q_{\mathsf{new}}, \, K_{\mathsf{new}}) \leftarrow (Q_{\mathsf{f}}, \, K_{\mathsf{f}})$, $x_{\mathsf{new}} \leftarrow x_{\mathsf{f}}$
\WHILE{$x_{\mathsf{0}} \notin E_{\mathsf{new}} (\upvarepsilon)$}
\STATE $q_{\mathsf{rand}} \leftarrow \texttt{RandomConfiguration}(\bigcap_{\mathsf{i=1}}^{\mathrm{n_o}} \bar{\mathsf{a}}_\mathrm{i})$
\STATE $E_{\mathsf{near}} (\upvarepsilon) \leftarrow \texttt{NearestBP}(q_{\mathsf{rand}}, \, G, \, \upvarepsilon)$
\STATE $q_{\mathsf{new}} \leftarrow \texttt{NewEquilibrium}(q_{\mathsf{rand}}, \, E_{\mathsf{near}} (\upvarepsilon))$
\STATE $x_{\mathsf{new}} \leftarrow F(q_{\mathsf{new}})$
\STATE $(Q_{\mathsf{new}}, \, K_{\mathsf{new}}) \leftarrow \texttt{BP} (x_{\mathsf{new}}, \emptyset, \bar{\mathsf{a}}_{\mathsf{1}}, \cdots, \bar{\mathsf{a}}_{\mathrm{n_o}}, \mathsf{Z_0}, \mathsf{U})$
\STATE $G.\texttt{AddVertex}(x_{\mathsf{new}}), \, G.\texttt{AddBP}((Q_{\mathsf{new}}, \, K_{\mathsf{new}})),$
$G.\texttt{AddEdge}((x_{\mathsf{near}}, \, x_{\mathsf{new}}))$
\ENDWHILE
\STATE $(Q_{\mathsf{0}}, \, K_{\mathsf{0}}) \leftarrow \texttt{BP} (x_{\mathsf{a}_\mathsf{0}}, \, \mathsf{a}_{\mathsf{0}}, \, \bar{\mathsf{a}}_{\mathsf{1}}, \, \cdots, \, \bar{\mathsf{a}}_{\mathrm{n_o}}, \, \mathsf{Z_0}, \, \mathsf{U})$
\STATE $G.\texttt{AddVertex}(x_{\mathsf{0}}), \, G.\texttt{AddBP}((Q_{\mathsf{0}}, \, K_{\mathsf{0}})),$ $G.\texttt{AddEdge}((x_{\mathsf{new}}, \, x_{\mathsf{0}}))$
\end{algorithmic}
\end{algorithm}

In \cite{he2020bp}, a barrier pair rapidly-exploring random tree ($\mathsf{BP}$-$\mathsf{RRT}$) method is introduced which leverages rapidly-exploring random trees ($\mathsf{RRT}$) to combine a number of barrier pairs into a sequence that connects two polytopic regions in the reachable workspace. 
Compared $\mathsf{RRT}$, the $\mathsf{BP}$-$\mathsf{RRT}$ adds a barrier pair to each vertex in a graph providing additional robustness and safety guarantees for trajectory execution.

Algorithm~\ref{code:BP-RRT-1} shows the procedure for creating a $\mathsf{BP}$-$\mathsf{RRT}$ graph. 
Instead of applying a fixed incremental distance as $\mathsf{RRT}$ does in each iteration, a new robot configuration $q_{\mathsf{new}}$ is added to the graph by projecting a random configuration $q_{\mathsf{rand}}$ to the hyper-surface of the sub-level set $E_\mathsf{near} (\upvarepsilon)$ of the nearest barrier pair (for $0 < \upvarepsilon \leq 1$). 
Therefore, $q_{\mathsf{new}}$ is guaranteed to be inside the zero sub-level set of previously created barrier pairs. 
The algorithm terminates if there exists a new barrier pair $(Q_0, K_0)$ in the $\mathsf{BP}$-$\mathsf{RRT}$ graph whose zero sub-level set $E_0 (1)$ contains the entire region of $\mathsf{a_{0}}$.
Then, a sequence of barrier pairs that connects $\mathsf{a}_{\mathsf{0}}$ and $\mathsf{a}_{\mathsf{f}}$ can be extracted from the $\mathsf{BP}$-$\mathsf{RRT}$ graph. 

\subsection{Problem Statement}

In this paper, we consider a robot operating around multiple different polytopic regions defined in the workspace of its end-effector and a human operator that applies a norm-bound interaction force to the robot's end-effector intermittently. 
The goal of this human operator is to move the robot end-effector to the operator's target region.

{\bf Problem}:
During real-time human-robot shared autonomy operation, we aim to infer the operator's target region from a time series of intermittent human force measurements and create a sequence of barrier pairs such that the robot's end-effector can safely move to the target region without passing through all other regions.

\section{Barrier Pair Synthesis}

Similar to \cite{he2020bp}, our process of barrier pair synthesis starts by linearizing the robot's dynamics in \eqref{eq:ss-lg} and \eqref{eq:ss-jacobian} such that a norm-bound linear differential inclusion ($\mathsf{LDI}$) model can be formulated. 
Then, a $\mathsf{LMI}$ optimization problem can be created for synthesizing barrier pairs subject to predefined state space and input constraints. 
In particular, we formulate a $\mathsf{LMI}$ constraint in the barrier pair synthesis problem for enforcing the robot’s stability and the convergence of the barrier function under the influence of the norm-bound force input $w$ from the human. 

\subsection{Norm-Bound Linear Differential Inclusion Model}

Our barrier pair synthesis relies on solving an $\mathsf{LMI}$ optimization problem formulated based on a linear model of the robot dynamics. However, the linearized state space equations in \eqref{eq:ss-lg} and \eqref{eq:ss-jacobian} become inaccurate if the state $[ q ^ \top, \, \dot{q} ^ \top ] ^ \top$ deviates from the equilibrium.
In order to address this issue, we use a norm-bound $\mathsf{LDI}$ to represent the robot dynamics. First, we can express the norm-bound uncertainties of the linearized robot dynamical model in \eqref{eq:ss-lg} and \eqref{eq:ss-jacobian} as 
\begin{align}
- M ^ {-1} (q) \cdot C(q, \, \dot{q}) & \in \{ A_1 + A_2 \Delta A_3: \ \norm{\Delta} \leq 1 \} \label{eq:As} \\
M ^ {-1} (q) \cdot J ^ \top (q) & \in \{ \, B_1 ^ w + B_2 ^ w \Delta B_3 ^ w: \ \norm{\Delta} \leq 1 \} \label{eq:Bfs} \\
M ^ {-1} (q) & \in \{ B_1 ^ u + B_2 ^ u \Delta B_3 ^ u: \norm{\Delta} \leq 1 \} \label{eq:Bus} \\
J (q) & \in \{ \;\, J_1 + J_2 \Delta J_3 \;\,: \ \norm{\Delta} \leq 1 \} \label{eq:Js} 
\end{align}
for all state $[ q ^ \top, \, \dot{q} ^ \top ] ^ \top$ in the constrained state space $\mathsf{Z}$ around the equilibrium.
Then, a norm-bound $\mathsf{LDI}$ \cite{boyd1994linear} that is valid for all states in $\mathsf{Z}$ can be expressed as
\begin{align}
\begin{bmatrix}
\dot{\tilde{q}} \\
\ddot{\tilde{q}}
\end{bmatrix}
& =
\begin{bmatrix}
\mathbf{0} & I \\
\mathbf{0} & A_1 + A_2 \Delta A_3
\end{bmatrix}
\begin{bmatrix}
\tilde{q} \\
\dot{\tilde{q}}
\end{bmatrix}
+
\begin{bmatrix}
\mathbf{0} \\
B_1 ^ u + B_2 ^ u \Delta B_3 ^ u
\end{bmatrix}
u \notag \\
& +
\begin{bmatrix}
\mathbf{0} \\
B_1 ^ w + B_2 ^ w \Delta B_3 ^ w
\end{bmatrix}
w \label{eq:ss-lg-robust} \\
\tilde{x} \,
& =
\begin{bmatrix}
J_1 + J_2 \Delta J_3 & \mathbf{0}
\end{bmatrix}
\begin{bmatrix}
\tilde{q} \\
\dot{\tilde{q}}
\end{bmatrix}. \label{eq:ss-jacobian-robust}
\end{align}
We can formulate a norm-bound $\mathsf{LDI}$ by calculating $M ^ {-1} (q) \cdot C(q, \, \dot{q})$, $M ^ {-1} (q) \cdot J ^ \top (q, \, \dot{q})$, $M ^ {-1} (q)$ and $J (q)$ from a number of sample states in $\mathsf{Z}$ and using quadric inclusion programs \cite{thomas2019quadric} to fit an inclusion model.

The constrained state space region for the norm-bound $\mathsf{LDI}$ is defined as $\mathsf{Z} \triangleq \mathsf{Z_{safe}} \cap \mathsf{Z_{0}}$.
Based on the inequality constraints $|a_\mathrm{i}  \tilde{x}| < \bar{\mathit{a}}_\mathrm{i}$ associated with the undesirable regions $\mathsf{a_1, \, a_2, \, \cdots, \, a_{n_o}}$, a local convex state space region $\mathsf{Z_{safe}}$ can be defined as
\begin{equation} \label{eq:Qsafe}
\begin{aligned}
\mathsf{Z_{safe}} 
\triangleq 
\{ 
[ \tilde{q} ^ \top, \, \dot{\tilde{q}} ^ \top] ^ \top 
:
\abs{a_\mathrm{i}  (J_1 + J_2 \Delta J_3) \ \tilde{q}} < \bar{\mathit{a}}_\mathrm{i}, \\
\norm{\Delta} \leq 1, \
\mathrm{i = 1, \, \cdots, \, n_o}
\},
\end{aligned}
\end{equation}
where $a_\mathrm{i}$ for $\mathrm{i = 1, \, \cdots, \, n_o}$ are row vectors with $\mathrm{n_o}$ as the number of undesirable regions.
However, the norm-bound uncertainty in $\mathsf{Z_{safe}}$ can be too large for the barrier pair sub-problem to be solved. So we also need to consider an additional constrained state space $\mathsf{Z_0}$ defined as
\begin{equation} \label{eq:Q0}
\begin{aligned}
\mathsf{Z_0} 
\triangleq 
\{ 
\mathbf{[ \tilde{q} ^ \top, \, \dot{\tilde{q}} ^ \top] ^ \top} 
:
\abs{b_\mathrm{i}  (J_1 + J_2 \Delta J_3) \ \tilde{q}} < \bar{\mathit{x}}_\mathrm{i}, \
\abs{b_\mathrm{i}  \dot{\tilde{q}}} < \bar{\dot{\mathit{q}}}_\mathrm{i}, \\
\norm{\Delta} \leq 1, \
\mathrm{i = 1, \, \cdots, \, n}
\},
\end{aligned}
\end{equation}
where $b_\mathrm{i}$ for $\mathrm{i = 1, \, \cdots, \, n}$ are the standard basis (row) vectors of the $\mathrm{n}$-dimensional Euclidean space. 

Similar to \eqref{eq:Qsafe} and \eqref{eq:Q0}, a constrained input space region $\mathsf{U}$ and a constrained external input space region $\mathsf{W}$ can be formulated as
\begin{align}
\mathsf{U} 
& \triangleq 
\{ 
\mathbf{u} 
:
\abs{\mathbf{b_\mathrm{i}}  \mathbf{u}} < \bar{\mathit{u}}_\mathrm{i}, \
\mathrm{i = 1, \, \cdots, \, n}
\}. \label{eq:U} \\
\mathsf{W} 
& \triangleq 
\{ 
\mathbf{w} 
:
\norm{\mathbf{w}} < \bar{\mathit{w}}
\}. \label{eq:F}
\end{align}

\begin{figure}[!tbp]
\footnotesize
\centering
\def\svgwidth{0.5\textwidth}
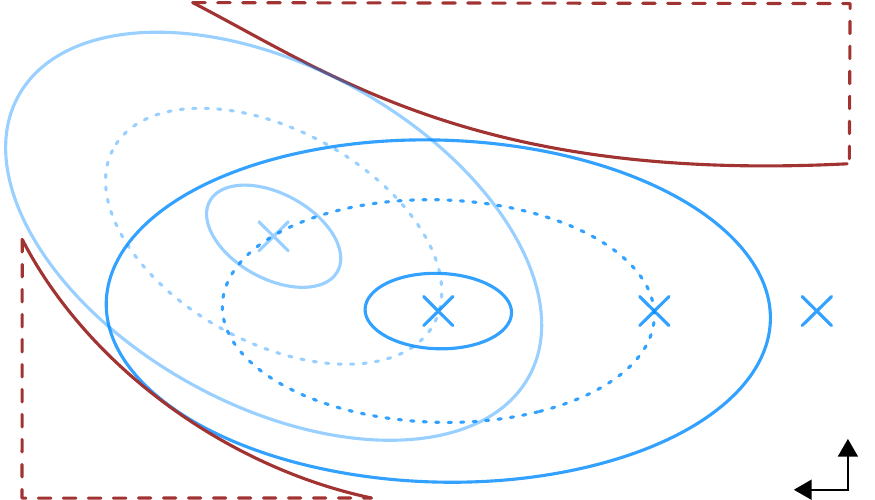
\caption{By projecting a random joint space position $q_{\mathsf{rand}}$ to the hyper-surface of $E_\mathsf{near}(\upepsilon_1)$ of the nearest barrier pair, a new equilibrium of $\mathsf{BP}$-$\mathsf{RRT}$ is created. The residue set $E_\mathsf{new}(\upepsilon_0)$ of the new barrier pair is designed to be strictly inside the zero sub-level set $E_\mathsf{near}(1)$ of the nearest barrier pair. Notice that even if the undesirable regions of the workspace are polytopic, their joint space projections are not guaranteed to be also polytopic. The numbers indicate $^1$ the residue set $E_\mathsf{near}(\upepsilon_0)$, $^2$ the hyper-surface of $E_\mathsf{near}(\upepsilon_1)$, and $^3$ the zero sub-level set $E_\mathsf{near}(1)$.}
\label{fig:fig-3}
\end{figure}

\subsection{Barrier Pair Synthesis Sub-Problems}

Based on the norm bound $\mathsf{LDI}$ model expressed in \eqref{eq:ss-lg-robust} and \eqref{eq:ss-jacobian-robust}, we can formulate the $\mathsf{LMI}$s for creating our barrier pair synthesis problem.
In order to ensure that the ellipsoidal sub-level set $E (1)$ of a barrier pair contains a desired polytopic region $\mathsf{a_{d}}$, we sample a number of points from all edges of $\mathsf{a_{d}}$ and let $E (1)$ contain the joint space projections of these Cartesian space samples using the following set of $\mathsf{LMI}$s
\begin{equation} \label{eq:x-inclusion}
\begin{bmatrix}
1 & \star \\
 R (x_\mathrm{i}) - q_e & S_1 Q  S_1 ^ \top 
\end{bmatrix} \succeq 0, \quad \forall \ \mathrm{i = 1, \dotsc, n_p} 
\end{equation}
where $\mathrm{n_p}$ is the number of sampled workspace points at the edge of $\mathsf{a}_{\mathsf{d}}$, $\mathsf{a}_{\mathsf{d}} = \mathsf{Co} \{x_1, \, \cdots, \, x_\mathrm{p} \}$, $R (\cdot)$ is an inverse kinematics operator and $S_1 \triangleq [ \mathbf{I}_\mathrm{n \times n}, \, \mathbf{0}_\mathrm{n \times n}]$.

Similar to \cite{he2020bp}, the constraints $\mathsf{Z_{safe}}$, $\mathsf{Z_{0}}$ and $\mathsf{U}$ in \eqref{eq:Qsafe}, \eqref{eq:Q0} and \eqref{eq:U} can be transformed into $\mathsf{LMI}$s
\begin{align}
\begin{bmatrix}
\bar{\mathit{a}}_\mathrm{i}^{2} Q & \star & \star & \star \\
\mathbf{0} & \upgamma_\mathrm{i} \mathbf{I} & \star & \star \\
a_\mathrm{i} J_1 S_1 Q & \upgamma_\mathrm{i} a_\mathrm{i} J_2 & 1 & \star \\
J_3 S_1 Q & \mathbf{0} & \vec{0} & \upgamma_\mathrm{i} \mathbf{I}
\end{bmatrix} & \succeq 0, \,
\forall \, \mathrm{i = 1, \dotsc, n_o}  \label{eq:x-exclusion} \\
\begin{bmatrix}
\bar{\mathit{x}}_\mathrm{i}^{2} Q & \star & \star & \star \\
\mathbf{0} & \upmu_\mathrm{i} \mathbf{I} & \star & \star \\
b_\mathrm{i} J_1 S_1 Q & \upmu_\mathrm{i} b_\mathrm{i} J_2 & 1 & \star \\
J_3 S_1 Q & \mathbf{0} & \vec{0} & \upmu_\mathrm{i} \mathbf{I}
\end{bmatrix} & \succeq 0, \,
\forall \, \mathrm{i = 1, \dotsc, n} \label{eq:x-limit} \\
\begin{bmatrix}
Q & \star \\
b_\mathrm{i} S_{2} Q & \bar{\dot{\mathit{q}}}_\mathrm{i} ^ 2
\end{bmatrix} & \succeq 0, \, \forall \, \mathrm{i = 1, \dotsc, n} \label{eq:qdot-limit} \\
\begin{bmatrix}
Q & \star \\
b_\mathrm{i} Y & \bar{\mathit{u}}_\mathrm{i} ^ 2
\end{bmatrix} & \succeq 0, \, \forall \, \mathrm{i = 1, \dotsc, n} \label{eq:u-limit}
\end{align}
where $\upgamma_\mathrm{i}$ for $\mathrm{i = 1, \dotsc, n_o}$ and $\upmu_\mathrm{i}$ for $\mathrm{i = 1, \dotsc, n}$ are positive real scalar variables, $S_2 \triangleq [ \mathbf{0}_\mathrm{n \times n}, \, \mathbf{I}_\mathrm{n \times n}]$ and $Y \triangleq K Q$.
We use the $\mathsf{S}$-procedure \cite{ma2006lmi} to transfer the the workspace position constraints defined in $\mathsf{Z_{safe}}$ and $\mathsf{Z_{0}}$ into $\mathsf{LMI}$s in \eqref{eq:x-exclusion} and \eqref{eq:x-limit}. $\mathsf{LMI}$s in \eqref{eq:qdot-limit} represent the joint velocity constraints in $\mathsf{Z_{0}}$. $\mathsf{LMI}$s in \eqref{eq:u-limit} represent the input constraints in $\mathsf{U}$.


By the following proposition, the robot's Lyapunov stability can be enforced under the impact of norm-bound human input $w$.

\begin{proposition}
For a robot starting from a state in the zero sub-level set $E(1)$ of the barrier pair $(B, K)$, the robot state converges to a residue set $E (\upvarepsilon_0)$ with an exponential convergence rate no less than $\frac{\upalpha}{2}$ if
\begin{equation} \label{eq:stability}
\begin{aligned}
\begin{bmatrix}
X_{11} & \star \\
X_{21} & X_{22} \\
\end{bmatrix} \preceq 0,
\end{aligned}
\end{equation}
where
{\small
}
\begin{align}
& X_{11}
=
\text{\small$
\begin{bmatrix}
\bar{A}_1 Q + Q \bar{A}_1 ^ \top + \bar{B}_1 ^ u Y + Y ^ \top \bar{B}_1 ^ {u \top} + \upalpha Q & \bar{B}_1 ^ w \\
\bar{B}_1 ^ {w \top} & - \upalpha \frac{\upvarepsilon_0^2}{\bar{\mathit{w}} ^ 2} \mathbf{I}
\end{bmatrix}
$} \notag
\\
& + 
\text{\small$
\begin{bmatrix}
\bar{A}_2 & \bar{B}_2 ^ u & \bar{B}_2 ^ w \\
\mathbf{0} & \mathbf{0} & \mathbf{0} \\
\end{bmatrix} 
\begin{bmatrix}
\upmu_x \mathbf{I} & \mathbf{0} & \mathbf{0} \\
\mathbf{0} & \upmu_u \mathbf{I} & \mathbf{0} \\
\mathbf{0} & \mathbf{0} & \upmu_w \mathbf{I} \\
\end{bmatrix}
\begin{bmatrix}
\bar{A}_2 & \bar{B}_2 ^ u & \bar{B}_2 ^ w \\
\mathbf{0} & \mathbf{0} & \mathbf{0} \\
\end{bmatrix} 
^ \top \label{eq:X11}
$}
\\
& X_{21} 
=
\begin{bmatrix}
\bar{A}_3 Q & \mathbf{0} \\
B_3 ^ u Y & \mathbf{0} \\
\mathbf{0} & B_3 ^ w \\
\end{bmatrix}
\\
& X_{22}
=
\begin{bmatrix}
- \upmu_x \mathbf{I} & \mathbf{0} & \mathbf{0} \\
\mathbf{0} & - \upmu_u \mathbf{I} & \mathbf{0} \\
\mathbf{0} & \mathbf{0} & - \upmu_w \mathbf{I} \\
\end{bmatrix} \\
& \bar{A}_1 = S_1 ^ \top S_2 + S_2 ^ \top A_1 S_2, \, \bar{A}_2 = S_2 ^ \top A_2, \, \bar{A}_3 = A_3 S_2, \label{eq:A-bar} \\
& \bar{B}_1 ^ u = S_2 ^ \top B_1 ^ u, \ \bar{B}_2 ^ u = S_2 ^ \top B_2 ^ u, \label{eq:Bu-bar} \\
& \bar{B}_1 ^ w = S_2 ^ \top B_1 ^ w, \ \bar{B}_2 ^ w = S_2 ^ \top B_2 ^ w, \label{eq:Bw-bar}
\end{align}
and $\upmu_\mathrm{x}$, $\upmu_\mathrm{u}$ and $\upmu_\mathrm{w}$ are positive real scalar variables.
\end{proposition}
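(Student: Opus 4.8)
The plan is to use $V(\tilde q, \dot{\tilde q}) \triangleq [\tilde q^\top, \dot{\tilde q}^\top] Q^{-1} [\tilde q^\top, \dot{\tilde q}^\top]^\top = B + 1$ as an input-to-state Lyapunov function and to show that \eqref{eq:stability} is exactly the certificate for the dissipation inequality
\[
\dot V \leq - \upalpha V + \upalpha \frac{\upvarepsilon_0^2}{\bar{\mathit{w}}^2} \norm{w}^2 .
\]
Writing $\xi \triangleq [\tilde q^\top, \dot{\tilde q}^\top]^\top$ and closing the loop with $u = K \xi$, the identities $S_1 = [\mathbf I, \mathbf 0]$ and $S_2 = [\mathbf 0, \mathbf I]$ let me recognize \eqref{eq:A-bar}--\eqref{eq:Bw-bar} as the block-embedded system matrices, so that the norm-bound $\mathsf{LDI}$ \eqref{eq:ss-lg-robust} becomes $\dot\xi = A_{\mathrm{cl}} \xi + B_{\mathrm{cl}} w$ with $A_{\mathrm{cl}} = (\bar A_1 + \bar A_2 \Delta \bar A_3) + (\bar B_1^u + \bar B_2^u \Delta B_3^u) K$ and $B_{\mathrm{cl}} = \bar B_1^w + \bar B_2^w \Delta B_3^w$. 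Computing $\dot V = 2 \xi^\top Q^{-1} \dot\xi$, the target inequality is equivalent to negative semidefiniteness, for every admissible $\Delta$, of the symmetric quadratic form in $[\xi^\top, w^\top]^\top$ with kernel
\[
\begin{bmatrix}
Q^{-1} A_{\mathrm{cl}} + A_{\mathrm{cl}}^\top Q^{-1} + \upalpha Q^{-1} & Q^{-1} B_{\mathrm{cl}} \\
B_{\mathrm{cl}}^\top Q^{-1} & - \upalpha \frac{\upvarepsilon_0^2}{\bar{\mathit{w}}^2} \mathbf I
\end{bmatrix} .
\]

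Next I would apply a congruence transformation by $\mathrm{diag}(Q, \mathbf I)$ and substitute $Y \triangleq K Q$. This clears the $Q^{-1}$ factors and turns the $(1,1)$ block into $\bar A_1 Q + Q \bar A_1^\top + \bar B_1^u Y + Y^\top \bar B_1^{u\top} + \upalpha Q$ plus $\Delta$-dependent remainders; its $\Delta$-independent part, together with the off-diagonal $\bar B_1^w$ and the $(2,2)$ entry, is precisely the first matrix in $X_{11}$. Every $\Delta$-dependent remainder has the Petersen form $P_j \Delta R_j + R_j^\top \Delta^\top P_j^\top$ for the three channels $j \in \{x, u, w\}$, with $P_x = [\bar A_2^\top, \mathbf 0]^\top$, $R_x = [\bar A_3 Q, \mathbf 0]$ and analogous expressions for the input and force channels built from $\bar B_2^u, B_3^u Y$ and $\bar B_2^w, B_3^w$. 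Applying the $\mathsf S$-procedure (Petersen's lemma, in its elementary sufficient direction obtained by completing the square on $\Delta^\top \Delta \preceq \mathbf I$) once per channel introduces the scalars $\upmu_x, \upmu_u, \upmu_w > 0$ and replaces the uncertain inequality by a single deterministic one in which the $\sum_j \upmu_j P_j P_j^\top$ term is exactly the product matrix in $X_{11}$, while the $\sum_j \upmu_j^{-1} R_j^\top R_j$ term is the Schur complement $- X_{21}^\top X_{22}^{-1} X_{21}$ against $X_{22} = \mathrm{diag}(-\upmu_x \mathbf I, -\upmu_u \mathbf I, -\upmu_w \mathbf I)$. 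Undoing this Schur complement reassembles the block $\mathsf{LMI}$ \eqref{eq:stability}.

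Finally, given the dissipation inequality, the comparison lemma finishes the argument: since $\norm{w} < \bar{\mathit{w}}$ forces $\upalpha \upvarepsilon_0^2 \norm{w}^2 / \bar{\mathit{w}}^2 < \upalpha \upvarepsilon_0^2$, whenever $V > \upvarepsilon_0^2$ we have $\dot V < - \upalpha (V - \upvarepsilon_0^2)$, so $V(t) - \upvarepsilon_0^2 \leq (V(0) - \upvarepsilon_0^2) e^{-\upalpha t}$ and $\xi$ converges to $E(\upvarepsilon_0)$ at rate $\upalpha / 2$, the halving coming from $V$ being quadratic in $\xi$. A short preliminary check secures forward invariance of $E(1)$, which keeps the $\mathsf{LDI}$ representation valid along the whole trajectory (the constraint $\mathsf{LMI}$s \eqref{eq:x-exclusion}--\eqref{eq:u-limit} already ensure $E(1) \subseteq \mathsf Z$, where the inclusion model holds): on $\partial E(1)$ one has $\dot V < - \upalpha + \upalpha \upvarepsilon_0^2 \leq 0$ because $\upvarepsilon_0 \leq 1$, so a trajectory started in $E(1)$ never leaves it. I expect the main obstacle to be the bookkeeping of the robust step rather than any conceptual hurdle: correctly matching the three independent norm-bounded blocks to their Petersen multipliers and verifying term by term that the Schur-complement reassembly reproduces $X_{21}$, $X_{22}$ and the product term in $X_{11}$.
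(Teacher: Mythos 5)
Your proposal is correct and follows essentially the same route as the paper's proof: a quadratic storage function certifying the dissipation inequality $\dot V \leq -\upalpha V + \upalpha\frac{\upvarepsilon_0^2}{\bar{\mathit{w}}^2}\norm{w}^2$, the three norm-bounded uncertainty channels absorbed with one positive multiplier each, and the congruence by $\mathrm{diag}(Q,\mathbf I)$ with $Y = KQ$ to reach \eqref{eq:stability}. The only differences are cosmetic---the paper runs the $\mathsf S$-procedure directly on the quadratic form in $[z^\top, w^\top, p_z^\top, p_u^\top, p_w^\top, 1]^\top$ with multipliers $\uplambda_j = \upmu_j^{-1}$ rather than invoking Petersen's lemma plus a Schur complement, and it leaves the comparison-lemma and invariance remarks implicit, which you spell out.
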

\begin{proof}
See Appendix~\ref{proof:stability}.
\end{proof}

Finally, the volume of the ellipsoid $E (1)$ is maximized through the cost function of the log of the determinant of $Q$ \cite{boyd1994linear}. 
A barrier pair synthesis sub-problem $(B, \, k) = \texttt{BP} (x_e, \, \mathsf{a}_{\mathsf{d}}, \, \bar{\mathsf{a}}_{\mathsf{1}}, \, \cdots, \, \bar{\mathsf{a}}_{\mathrm{n_o}}, \, \mathsf{Z_0}, \, \mathsf{U}, \, \mathsf{W})$ can be expressed as
\begin{equation} \label{optimization}
\begin{aligned}
& \underset{Q, \, Y}{\mathsf{maximize}}
& & \mathsf{log}(\mathsf{det} (Q)) \\
& \mathsf{subject \ to} & & Q \succ 0, \\
&&& \eqref{eq:x-inclusion}, \, \eqref{eq:x-exclusion}, \, \eqref{eq:x-limit}, \, \eqref{eq:qdot-limit}, \, \eqref{eq:u-limit}, \, \eqref{eq:stability} \\
\end{aligned}
\end{equation}
for finding a sub-level set $E (1)$ that contains the desired region $\mathsf{a}_{\mathsf{d}}$ and excludes the undesirable regions $\mathsf{a_1, \, a_2, \, \cdots, \, a_{n_o}}$.

\section{Human-Robot Shared Autonomy}

Based on the barrier pairs synthesized from the $\mathsf{LMI}$ problem formulated in \eqref{optimization}, we propose a two-layer control framework for addressing the interaction problem defined in Section~II.C. In the first layer, we employ the history of human input measurements to infer what the human wants the robot to do and define the robot's safety constraints based on it.
In the second layer, we use the $\mathsf{BP}$-$\mathsf{RRT}$ method \cite{he2020bp} to generate sequences of barrier pairs that safely move the robot's end-effector to different desired regions. Based on the human intention inference performed in the first layer, the robot can execute these barrier pair sequences accordingly and help to accomplish the human's objective. 

\begin{algorithm}[t]
\caption{$G \leftarrow \texttt{BPRRT}$\small$(\mathsf{a}_{\mathsf{0}}, \mathsf{a}_{\mathsf{f}}, \bar{\mathsf{a}}_{\mathsf{1}}, \cdots, \bar{\mathsf{a}}_{\mathrm{n_o}}, \mathsf{Z_0}, \mathsf{U}, \mathsf{W}, \upvarepsilon_0, \upvarepsilon_1)$} \label{code:BP-RRT-2}
\begin{algorithmic} [1]
\REQUIRE Initial region $\mathsf{a}_{\mathsf{0}}$, goal region $\mathsf{a}_{\mathsf{f}}$, constraints associated with undesirable regions $\bar{\mathsf{a}}_{\mathsf{1}}, \, \cdots, \, \bar{\mathsf{a}}_{\mathrm{n_o}}$, state space constraint $\mathsf{Z_0}$, robot input constraint $\mathsf{U}$, human input constraint $\mathsf{W}$, scalar $\upvarepsilon_0$ $(0 < \upvarepsilon_0 \leq 1)$, scalar $\upvarepsilon_1$ $(0 < \upvarepsilon_1 \leq 1)$
\ENSURE $\mathsf{BP}$-$\mathsf{RRT}$ graph $G$
\STATE $(Q_{\mathsf{f}}, \, K_{\mathsf{f}}) \leftarrow \texttt{BP} (x_{\mathsf{a}_\mathsf{f}}, \, \mathsf{a}_{\mathsf{f}}, \, \bar{\mathsf{a}}_{\mathsf{1}}, \, \cdots, \, \bar{\mathsf{a}}_{\mathrm{n_o}}, \, \mathsf{Z_0}, \, \mathsf{U}, \mathsf{W}, \upvarepsilon_0)$
\STATE $G.\texttt{AddVertex}(x_{\mathsf{f}}), \, G.\texttt{AddBP}((Q_{\mathsf{f}}, \, K_{\mathsf{f}}))$
\STATE $(Q_{\mathsf{new}}, \, K_{\mathsf{new}}) \leftarrow (Q_{\mathsf{f}}, \, K_{\mathsf{f}})$, $x_{\mathsf{new}} \leftarrow x_{\mathsf{f}}$
\WHILE{$x_{\mathsf{0}} \notin E_{\mathsf{new}} (\upvarepsilon_1)$}
\STATE $q_{\mathsf{rand}} \leftarrow \texttt{RandomConfiguration}(\bigcap_{\mathsf{i=1}}^{\mathrm{n_o}} \bar{\mathsf{a}}_\mathrm{i})$
\STATE $E_{\mathsf{near}} (\upvarepsilon_1) \leftarrow \texttt{NearestBP}(q_{\mathsf{rand}}, \, G, \, \upvarepsilon_1)$
\STATE $q_{\mathsf{att}} \leftarrow \texttt{NewEquilibrium}(q_{\mathsf{rand}}, \, E_{\mathsf{near}} (\upvarepsilon_1))$
\STATE $x_{\mathsf{att}} \leftarrow F(q_{\mathsf{att}})$
\STATE $(Q_{\mathsf{att}}, \, K_{\mathsf{att}}) \leftarrow \texttt{BP}$\small$(x_{\mathsf{att}}, \emptyset, \bar{\mathsf{a}}_{\mathsf{1}}, \cdots, \bar{\mathsf{a}}_{\mathrm{n_o}}, \mathsf{Z_0}, \mathsf{U}, \mathsf{W}, \upvarepsilon_0)$
\vspace{5pt}
\STATE $\upvarepsilon_2 \leftarrow \sqrt{[ q_\mathsf{near} ^ \top - q_\mathsf{att} ^ \top, \, \vec{0} ^ {\, \top}] ^ \top \ Q_\mathsf{att} ^ {-1} \ [ q_\mathsf{near} ^ \top - q_\mathsf{att} ^ \top, \, \vec{0} ^ {\, \top}]}$
\vspace{5pt}
\IF{$Q_\mathsf{att} \preceq \frac{(1 - \upvarepsilon_1) ^ 2}{\upvarepsilon_0 ^ 2} \cdot Q_\mathsf{near}$ $\mathbf{and}$ $Q_\mathsf{near} \preceq \frac{(1 - \upvarepsilon_2) ^ 2}{\upvarepsilon_0 ^ 2} \cdot Q_\mathsf{att}$}
\STATE $(Q_{\mathsf{new}}, \, K_{\mathsf{new}}) \leftarrow (Q_{\mathsf{att}}, \, K_{\mathsf{att}})$, $x_{\mathsf{new}} \leftarrow x_{\mathsf{att}}$
\STATE $G.\texttt{AddVertex}(x_{\mathsf{new}}), \, G.\texttt{AddBP}((Q_{\mathsf{new}}, \, K_{\mathsf{new}})),$
$G.\texttt{AddEdge}((x_{\mathsf{near}}, \, x_{\mathsf{new}}))$
\ENDIF
\ENDWHILE
\STATE $(Q_{\mathsf{0}}, \, K_{\mathsf{0}}) \leftarrow \texttt{BP} (x_{\mathsf{a}_\mathsf{0}}, \, \mathsf{a}_{\mathsf{0}}, \, \bar{\mathsf{a}}_{\mathsf{1}}, \, \cdots, \, \bar{\mathsf{a}}_{\mathrm{n_o}}, \, \mathsf{Z_0}, \, \mathsf{U}, \mathsf{W}, \upvarepsilon_0)$
\STATE $G.\texttt{AddVertex}(x_{\mathsf{0}}), \, G.\texttt{AddBP}((Q_{\mathsf{0}}, \, K_{\mathsf{0}})),$ $G.\texttt{AddEdge}((x_{\mathsf{new}}, \, x_{\mathsf{0}}))$
\end{algorithmic}
\end{algorithm}

\subsection{Barrier Pair Sampling}

Although Algorithm~\ref{code:BP-RRT-1} provides us the steps for creating a $\mathsf{BP}$-$\mathsf{RRT}$ graph, it cannot be used directly to solve the interaction problem because of the additional human input $w$. Therefore, we need to formulate a new barrier pair sampling algorithm for creating a barrier pair sequence that moves the robot's end-effector safely to a human desired region $\mathsf{a}_d$ under the human input intervention. 

We extend our $\mathsf{BP}$-$\mathsf{RRT}$ method to a robot under a human force input $w$, as outlined in Algorithm~\ref{code:BP-RRT-2}. 
The algorithm initializes the $\mathsf{BP}$-$\mathsf{RRT}$ graph by creating a barrier pair at $\mathsf{a}_{\mathsf{f}}$ in line~1-3, expands it by sampling new barrier pairs in line~4-15, and completes it by creating a barrier pair at $\mathsf{a}_{\mathsf{0}}$ in line~16-17. 

Different from Algorithm~\ref{code:BP-RRT-1}, Algorithm~\ref{code:BP-RRT-2} considers two scalar inputs $\upvarepsilon_0$ and $\upvarepsilon_1$ (Fig.~\ref{fig:fig-3}).
The first scalar input $\upvarepsilon_0$, previously introduced in \eqref{eq:X11}, defines the residue set $E(\upvarepsilon_0)$ of a barrier pair. 
Similar to the scalar input $\upvarepsilon$ in Algorithm~\ref{code:BP-RRT-1}, the second scalar input $\upvarepsilon_1$ defines a hyper-surface of sub-level set $E_\mathsf{near} (\upvarepsilon_1)$ of the nearest barrier pair found in line~6 such that a new equilibrium $q_\mathsf{att}$ can be obtained by projecting a random configuration $q_\mathsf{rand}$ sampled in line 5 to this hyper-surface. 

In order to enforce the robot's safe transition between two barrier pairs of an edge in the graph, the residue set $E_\mathsf{att}(\upvarepsilon_0)$ of the newly sampled barrier pair created in line 9 needs to be completely inside the zero sub-level set $E_\mathsf{near} (1)$ of the nearest barrier pair found in line 6. We can check this safety requirement through the condition stated in the following proposition. 

\begin{proposition}
Suppose $(Q_1, K_1)$ and $(Q_2, K_2)$ represent two barrier pairs forming an edge in a $\mathsf{BP}$-$\mathsf{RRT}$ graph. Let $z_1 \triangleq [ q_1 ^ \top, \, \vec{0} ^ {\, \top}] ^ \top$ and $z_2 \triangleq [ q_2 ^ \top, \, \vec{0} ^ {\, \top}] ^ \top$ be the equilibrium points of $(Q_1, K_1)$ and $(Q_2, K_2)$ located at the hyper-surface of $E_2 (\upvarepsilon_2)$ and $E_1 (\upvarepsilon_1)$, respectively. Let $E_1 (\upvarepsilon_0)$ and $E_2 (\upvarepsilon_0)$ be the residue sets of $(Q_1, K_1)$ and $(Q_2, K_2)$. The robot can safely transit between the zero sub-level sets $E_1 (1)$ and $E_2 (1)$ of these two barrier pairs if $Q_1 \preceq \frac{(1 - \upvarepsilon_2) ^ 2}{\upvarepsilon_0 ^ 2} \cdot Q_2$ and $Q_2 \preceq \frac{(1 - \upvarepsilon_1) ^ 2}{\upvarepsilon_0 ^ 2} \cdot Q_1$.
\end{proposition}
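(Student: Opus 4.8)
The plan is to reduce the asserted ``safe transition'' to a pair of ellipsoid-containment statements, namely $E_1(\upvarepsilon_0) \subseteq E_2(1)$ and $E_2(\upvarepsilon_0) \subseteq E_1(1)$, and then derive each containment from one of the two matrix inequalities together with the corresponding hypothesis on where an equilibrium sits. The reason these containments are what is needed is Proposition~1: starting anywhere in the zero sub-level set $E_1(1)$, the closed-loop state under $K_1$ converges to the residue set $E_1(\upvarepsilon_0)$; once there, the inclusion $E_1(\upvarepsilon_0) \subseteq E_2(1)$ places the state inside the zero sub-level set of the second pair, where by property (b) of Definition~1 every state and input constraint holds, so the robot may hand over to $K_2$ without ever leaving the safe set. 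The symmetric inclusion governs the reverse hand-over, so the edge supports a genuinely bidirectional safe transition.

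To prove $E_1(\upvarepsilon_0) \subseteq E_2(1)$, I would work in the common absolute coordinate $\zeta \triangleq [ q ^ \top, \, \dot{q} ^ \top ] ^ \top$, in which each sub-level set is an ellipsoid $E_\mathrm{i}(\upvarepsilon) = \{ \zeta : (\zeta - z_\mathrm{i}) ^ \top Q_\mathrm{i} ^ {-1} (\zeta - z_\mathrm{i}) \leq \upvarepsilon ^ 2 \}$ centered at its equilibrium $z_1$ or $z_2$. Writing $\norm{v}_{Q_2^{-1}} \triangleq \sqrt{v ^ \top Q_2 ^ {-1} v}$, the hypothesis that $z_1$ lies on the hyper-surface of $E_2(\upvarepsilon_2)$ becomes the exact identity $\norm{z_1 - z_2}_{Q_2^{-1}} = \upvarepsilon_2$. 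The key step is then the triangle inequality $\norm{\zeta - z_2}_{Q_2^{-1}} \leq \norm{\zeta - z_1}_{Q_2^{-1}} + \norm{z_1 - z_2}_{Q_2^{-1}}$, which splits the distance to the far center into a term I can bound by the ellipsoid comparison and the known separation $\upvarepsilon_2$ of the two centers.

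The comparison step turns the matrix inequality into a norm bound. Taking inverses in $Q_1 \preceq \frac{(1 - \upvarepsilon_2) ^ 2}{\upvarepsilon_0 ^ 2} Q_2$ and using monotonicity of inversion on the positive-definite cone gives $Q_2 ^ {-1} \preceq \frac{(1 - \upvarepsilon_2) ^ 2}{\upvarepsilon_0 ^ 2} Q_1 ^ {-1}$; hence for any $\zeta \in E_1(\upvarepsilon_0)$, where $\norm{\zeta - z_1}_{Q_1^{-1}} \leq \upvarepsilon_0$, I obtain $\norm{\zeta - z_1}_{Q_2^{-1}} \leq \frac{1 - \upvarepsilon_2}{\upvarepsilon_0} \norm{\zeta - z_1}_{Q_1^{-1}} \leq 1 - \upvarepsilon_2$. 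Substituting into the triangle inequality yields $\norm{\zeta - z_2}_{Q_2^{-1}} \leq (1 - \upvarepsilon_2) + \upvarepsilon_2 = 1$, i.e. $\zeta \in E_2(1)$. Exchanging the indices $1 \leftrightarrow 2$ and using the second inequality $Q_2 \preceq \frac{(1 - \upvarepsilon_1) ^ 2}{\upvarepsilon_0 ^ 2} Q_1$ together with $z_2 \in \partial E_1(\upvarepsilon_1)$ proves $E_2(\upvarepsilon_0) \subseteq E_1(1)$ by an identical argument.

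I expect the main obstacle to be bookkeeping rather than deep mathematics: keeping the two local equilibrium frames straight while expressing both ellipsoids in one absolute frame, and getting the two reversals right, namely the direction of the inequality under matrix inversion and the direction of the triangle inequality. A secondary point to state carefully is \emph{why} the geometric containment is exactly what a safe handover requires; this is the explicit appeal to the convergence-to-residue-set guarantee of Proposition~1, which ties the purely set-theoretic inclusion back to the dynamical switch between the two controllers.
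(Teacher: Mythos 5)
Your proposal is correct and follows essentially the same route as the paper's appendix: it reduces the safe transition to the two containments $E_1(\upvarepsilon_0)\subseteq E_2(1)$ and $E_2(\upvarepsilon_0)\subseteq E_1(1)$ (justified via the convergence guarantee of Proposition~1) and establishes each by combining the triangle inequality for the weighted norm with order reversal of matrix inversion on the positive-definite cone. Incidentally, your pairing of hypotheses with containments ($Q_1 \preceq \frac{(1-\upvarepsilon_2)^2}{\upvarepsilon_0^2}Q_2$ together with $z_1$ on the hyper-surface of $E_2(\upvarepsilon_2)$ yielding $E_1(\upvarepsilon_0)\subseteq E_2(1)$, and symmetrically for the other direction) is the internally consistent one, whereas the paper's appendix attaches the hypothesis $Q_1 \preceq \frac{(1-\upvarepsilon_2)^2}{\upvarepsilon_0^2}Q_2$ to the containment $E_2(\upvarepsilon_0)\subseteq E_1(1)$ --- a labeling slip rather than a substantive difference, since both conditions are assumed in the proposition.
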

\begin{proof}
See Appendix~\ref{proof:proposition}.
\end{proof}

Line 11 in Algorithm~\ref{code:BP-RRT-2} checks the condition in Proposition~2. Notice that this condition guarantees the safe transition between $(Q_\mathsf{near}, K_\mathsf{near})$ and $(Q_\mathsf{att}, K_\mathsf{att})$ in both directions. Therefore, although the graph is initialized from $\mathsf{a_{f}}$ and expanded toward $\mathsf{a_{0}}$, we can finally extract a sequence of barrier pairs which plan safe robot trajectories from $\mathsf{a_{0}}$ to $\mathsf{a_{f}}$ and from $\mathsf{a_{f}}$ to $\mathsf{a_{0}}$.

\subsection{Human Intention Inference}

Based on the concept of Boltzmann rationality~\cite{baker2007goal, morgenstern1953theory}, we propose our human intent inference method for interpreting the human input $w$ in the shared autonomy.
Boltzmann rationality formalizes intent according to a variable that quantifies the value of the human's actions.
In particular, it states that a rational human takes an action with probability proportional to the exponentiated value of that human action. Therefore, an action with higher value is more probable to be chosen by the human.

In the setting of robotic manipulation considered in this paper, we define the value of the human's action based on how well the human force $w$ aligns with the direction toward the human's goal. 
Let $\mathsf{a}$ denote the human's goal, $x_t$ denote the position of the robot's end-effector at time ${t}$, and $w_t$ denote the human force exerted at time ${t}$. 
Recall that $x_\mathsf{a_\mathrm{i}}$ is the center of a polytopic region $\mathsf{a_\mathrm{i}}$ in the workspace. 
We define the likelihood function of exerting the force $w_t$ conditioned on the true human's goal $\mathsf{a}$ as
\begin{equation}\label{eq:likelihood}
\mathsf{p} (w_t \mid \mathsf{a}=\mathsf{a_\mathrm{i}}) = {\upbeta_0 \cdot \mathrm{exp}(\upbeta_1 \cdot \langle w_t, \, x_\mathsf{a_\mathrm{i}} - x_t \rangle)},
\end{equation}
where $\upbeta_0 > 0$, $\upbeta_1 > 0$, and the value of the human's action is captured by the inner product of $w_t$ and the direction toward the target region $x_\mathsf{a_\mathrm{i}} - x_t$. The value of this inner product indicates how well the exerted force is correlated with the direction toward the target region. 
$\upbeta_0$ is a partition function defined as 
\begin{equation}
\upbeta_0 ^ {-1} = \int\limits_{w \in \mathsf{W}} {\mathrm{exp}(\upbeta_1 \cdot \langle w_t, \, x_\mathsf{a_\mathrm{i}} - x_t \rangle)} \, \mathrm{d}w,
\end{equation}
where $\mathsf{W}$ is the domain of feasible human force input defined in \eqref{eq:F}. $\upbeta_1$ is the rationality parameter representing the degree of human's rationality.

Now, using the likelihood function, we can compute and update the robot's belief over the human's intended target region. Let us define the robot's belief as
\begin{equation}
    \mathsf{b_\mathrm{t}} (\mathsf{a_\mathrm{i}}) = \mathsf{p} (\mathsf{a}=\mathsf{a_\mathrm{i}} \mid w_0, \, \ldots, \, w_t)
\end{equation}
which denotes the probability of the target region being $\mathsf{a_\mathrm{i}}$ given the history of human's inputs. Initially, the system starts with a uniform belief, i.e., $\mathsf{b_\mathrm{0}} \sim \texttt{unif}\{\mathrm{1},\mathrm{n_o}\}$. Then, we can update the belief using the Bayes' theorem 
\begin{equation}
\mathsf{b_\mathrm{t}} (\mathsf{a_\mathrm{i}}) = \frac{\hfill \mathsf{b_\mathrm{t-1}} (\mathsf{a_\mathrm{i}}) \cdot \mathsf{p} (w_t \mid \mathsf{a}=\mathsf{a_\mathrm{i}})}{\sum_{\mathsf{j = 1}}^{\mathrm{n_o}} \mathsf{b_\mathrm{t-1}} (\mathsf{a_\mathrm{j}}) \cdot \mathsf{p} (w_t \mid \mathsf{a}=\mathsf{a_\mathrm{j}})},
\end{equation}
where $\mathsf{p} (w_t \mid \mathsf{a}=\mathsf{a_\mathrm{i}})$ is computed according to \eqref{eq:likelihood}. 

At time $t$, the robot's belief is used to select the sequence of barrier pairs that carry out the task of safely going to the estimated target region $\hat{\mathsf{a}}(\mathrm{t})$ calculated as
\begin{equation}
    \hat{\mathsf{a}}(\mathrm{t}) = \argmax_{i \in \{\mathrm{1},\mathrm{2},\ldots,\mathrm{n_o}\}} \mathsf{b_\mathrm{t}} (\mathsf{a_\mathrm{i}}),
\end{equation}
which has the highest probability of being the human's intended goal.

\section{Example}

In this section, we demonstrate the proposed control framework through a simulation of a 2-link manipulator robot with an equal length of $0.75 \, \mathrm{m}$ for each link, a mass of $2.5 \, \mathrm{kg}$ located at the distal end of each link, and a torque limit of $25 \, \mathrm{N \cdot m}$ for each joint. 
Fig.~\ref{fig:fig-8} shows the polytopic regions in the workspace of the robot end effector, where $\mathsf{a}_1$, $\mathsf{a}_2$, and $\mathsf{a}_3$ represent the desired task regions, $\mathsf{a}_4$, $\mathsf{a}_5$, and $\mathsf{a}_6$ represent obstacle regions, and $\mathsf{a}_7$ represents the region where the robot's base is located.


The manipulator robot starts from an end-effector position in $\mathsf{a}_1$.
A human operator decides whether to apply a $1$ $\mathrm{N}$ force to the end-effector during the simulation. 
The human operator chooses the direction of the $1$ $\mathrm{N}$ force from 8 different possible directions through a keyboard.

\subsection{Barrier Pair Synthesis}

We use Algorithm~\ref{code:BP-RRT-2} to build barrier pair sequences which connect between $\mathsf{a_1}$, $\mathsf{a_2}$, and $\mathsf{a_3}$ (Fig.~\ref{fig:fig-1}.a-c). 
Barrier pairs $\mathsf{c}_1$, $\mathsf{c}_2$, and $\mathsf{c}_3$ are in the middle of the sequences from $\mathsf{a}_2$ to $\mathsf{a}_3$, from $\mathsf{a}_3$ to $\mathsf{a}_1$, and from $\mathsf{a}_1$ to $\mathsf{a}_2$, respectively. 
Sometimes, the inference of the human target region may change and result in the barrier pair sequence currently executed by the robot to be invalid. 
Therefore, we also use Algorithm~\ref{code:BP-RRT-2} to connect between $\mathsf{c}_1$, $\mathsf{c}_2$, and $\mathsf{c}_3$ (Fig.~\ref{fig:fig-1}.d-f) such that the robot can freely switch between the correct barrier pair sequences without going through any undesired regions. Fig.~\ref{fig:fig-6} shows the transitions between $\mathsf{a_1}$, $\mathsf{a_2}$, $\mathsf{a_3}$, $\mathsf{c}_1$, $\mathsf{c}_2$, and $\mathsf{c}_3$.

\subsection{Simulation}

The video of this simulation is available at \url{https://youtu.be/xTprU0jMT8w}.

The simulated manipulator robot uses the measurement of the $1$ $\mathrm{N}$ force to infer the human operator's desired goal region.
As the video shows, the initial force input from the human operator is sometimes ambiguous because it can point to multiple potential goal regions.
However, the proposed human intention inference method is able to successfully recover the intended goal fast enough such that the manipulator robot does not move its end-effector to an incorrect goal region.  

\begin{figure}[!tbp]
\centering
\def\svgwidth{0.49\textwidth}
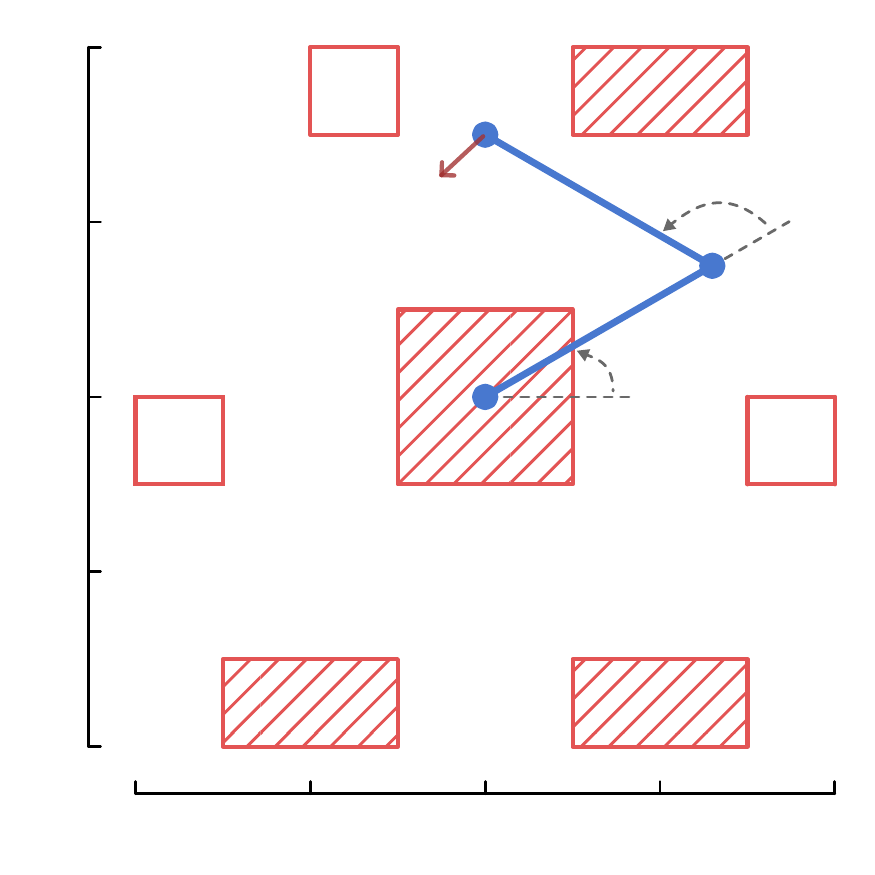
\caption{A 2-link manipulator robot moves its end effector in a workspace with different polyoptic regions. $\mathsf{a}_1$, $\mathsf{a}_2$, and $\mathsf{a}_3$ represent the desired task regions, $\mathsf{a}_4$, $\mathsf{a}_5$, and $\mathsf{a}_6$ represent obstacle regions, and $\mathsf{a}_7$ represents the region where the robot's base is located.}
\label{fig:fig-8}
\end{figure}

\begin{figure}[!tbp]
\centering
\def\svgwidth{0.49\textwidth}
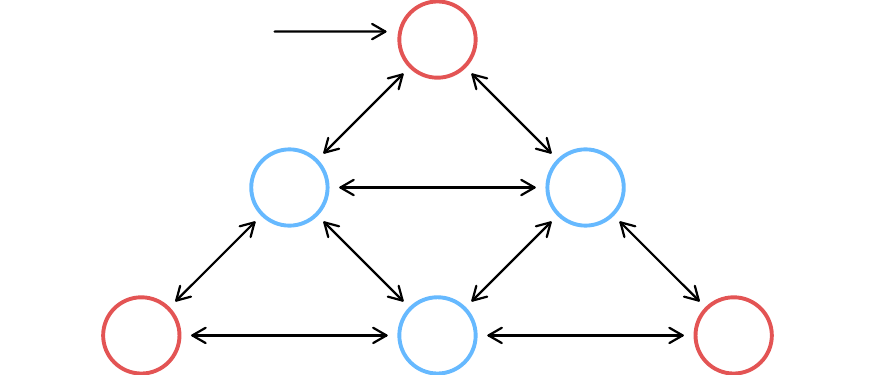
\caption{A finite state machine indicates the possible transitions between $\mathsf{a_1}$, $\mathsf{a_2}$, $\mathsf{a_3}$, $\mathsf{c}_1$, $\mathsf{c}_2$, and $\mathsf{c}_3$.}
\label{fig:fig-6}
\end{figure}

\begin{figure*}
\footnotesize
\centering
\def\svgwidth{1.0\textwidth}
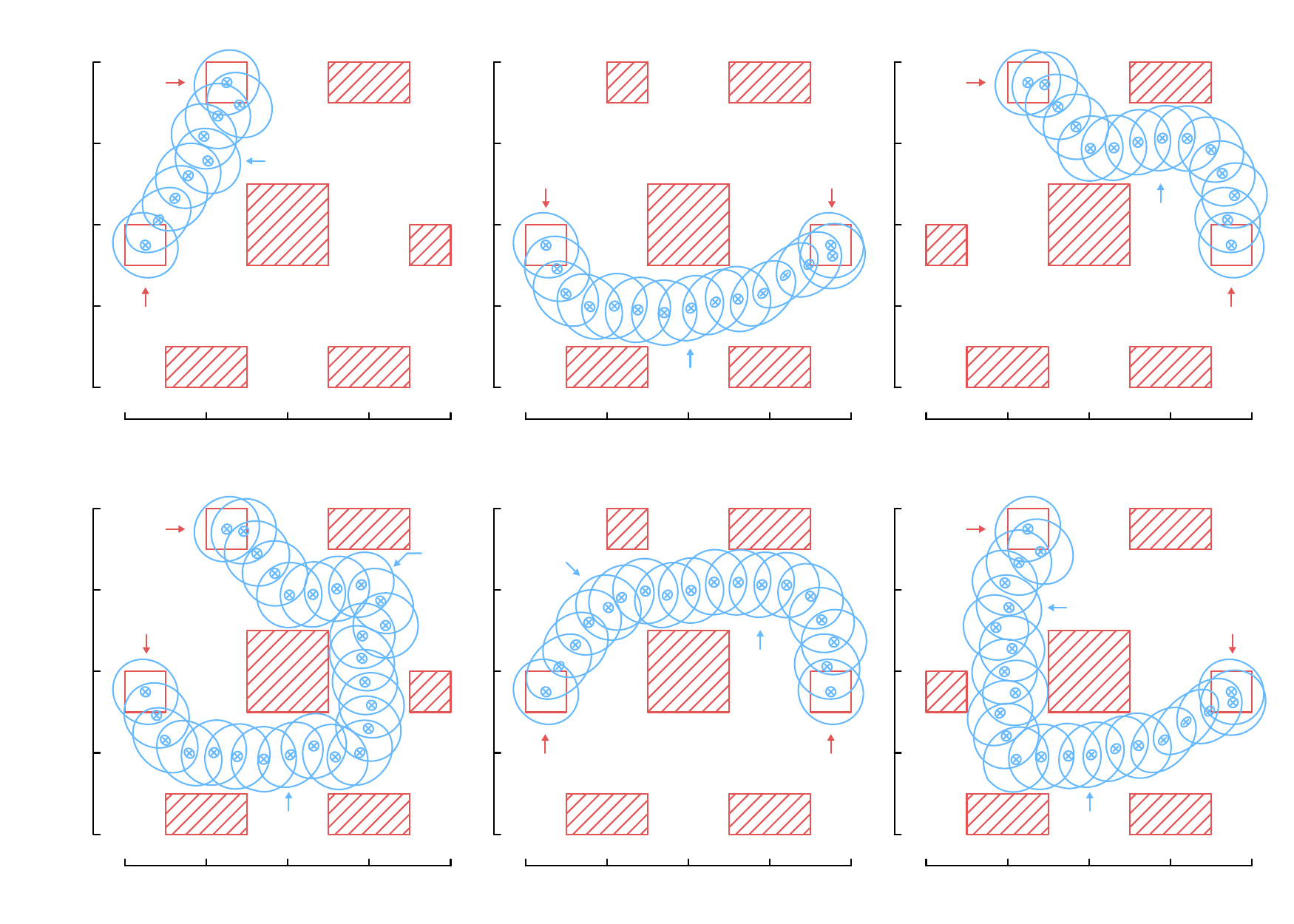
\caption{Barrier pair sequences connect $\mathsf{a_{init}}$ and $\mathsf{a_{goal}}$ and avoid passing through the undesirable polytopic regions. The bigger and the smaller ellipsoids indicate the zero sub-level sets and the residue sets of the barrier pairs projected onto the $\mathit{x}_1$-$\mathit{x}_2$ space. The arrows point to $\mathsf{a_1}$, $\mathsf{a_2}$, $\mathsf{a_3}$, $\mathsf{c}_1$, $\mathsf{c}_2$, $\mathsf{c}_3$.}
\label{fig:fig-1}
\end{figure*}

\section{Discussion}

In this research, our two-layer control framework uses a Bayesian inference method to infer the human operator's objective while uses the state feedback controllers in the barrier pairs to guarantee the safe operation. Compared to force-based control strategies \cite{buerger2007complementary, he2019modeling} that consider the robot as a strict human follower, the state feedback controllers allow the robot to correct the potentially unsafe robot trajectories caused by the human force intervention. While a lot of other safety control methods such as the quadratic programming methods \cite{ames2016control, nguyen2016optimal, nguyen2016exponential} also enforce the satisfaction of state-space constraints, maintaining safety under robot input saturation is difficult. To guarantee the safe operation under input saturation, we incorporate the $\mathsf{LMI}$s in \eqref{eq:u-limit} into the synthesis of our barrier pairs.  

Before running Algorithm~\ref{code:BP-RRT-2}, we need to decide the values of $\upepsilon_0$ and $\upepsilon_1$. 
In general, a larger value of $\upepsilon_0$ makes the $\mathsf{LMI}$ problem in \eqref{optimization} more solvable. However, we also need to have $\upepsilon_0 < 1 - \upepsilon_1$ such that the condition $Q_\mathsf{att} \preceq \frac{(1 - \upvarepsilon_1) ^ 2}{\upvarepsilon_0 ^ 2} \cdot Q_\mathsf{near}$ in line 11 of Algorithm~\ref{code:BP-RRT-2} can be fulfilled. 
How to select the value of $\upepsilon_1$ is also a complicated problem.
On the one hand, a smaller value of $\upepsilon_1$ allows us to select a larger value of $\upepsilon_0$.
On the other hand, a smaller value of $\upepsilon_1$ also means we need to synthesize more barrier pairs for each sequence.
In the provided example, we choose $\upepsilon_0 = 0.15$ and $\upepsilon_1 = 0.80$ based on all these considerations.

In \cite{he2020bp}, the transition between two barrier pairs in a $\mathsf{BP}$-$\mathsf{RRT}$ graph is unidirectional, so it is hard for the robot to return to its former location or switch between different barrier pair sequences. In this paper, we resolve this issue by enforcing the condition in Proposition~2. This condition is checked after the barrier pair synthesis because $Q_\mathsf{near} \preceq \frac{(1 - \upvarepsilon_2) ^ 2}{\upvarepsilon_0 ^ 2} \cdot Q_\mathsf{att}$ in line 11 of Algorithm~\ref{code:BP-RRT-2} is a non-convex $\mathsf{LMI}$ constraint and cannot be included in the convex optimization problem defined in \eqref{optimization}. However, this also means $Q_\mathsf{att} \preceq \frac{(1 - \upvarepsilon_1) ^ 2}{\upvarepsilon_0 ^ 2} \cdot Q_\mathsf{near}$ in line 11 of Algorithm~\ref{code:BP-RRT-2} is a convex $\mathsf{LMI}$ constraint, which can be potentially added to our barrier pair synthesis problem.

In the provided example, we create three additional barrier pair sequences (Fig.~\ref{fig:fig-1}.d-f) to connect the midway barrier pairs $\mathsf{c}_1$, $\mathsf{c}_2$, and $\mathsf{c}_3$ such that the robot's end-effector can safely switch between the three original barrier pair sequences (Fig.~\ref{fig:fig-1}.a-c).  
But this strategy also creates more barrier pair sequences than necessary. 
The number of barrier pair sequences will increase dramatically if we consider more polytopic regions in Fig.~\ref{fig:fig-8} as the possible human desired regions.
A potential approach to resolve this issue is considering all possible human desired regions in one barrier pair sampling algorithm and creating a roadmap \cite{kavraki1996probabilistic} instead of just one sequence of barrier pairs.


\appendix
\subsection{Proof of Proposition 1} \label{proof:stability}

Let us define $z \triangleq [ \tilde{q} ^ \top, \, \dot{\tilde{q}} ^ \top] ^ \top$. Based on the matrices defined in \eqref{eq:A-bar}, \eqref{eq:Bu-bar} and \eqref{eq:Bw-bar}, \eqref{eq:ss-lg-robust} can be expressed as
\begin{equation}
\dot{z} = \bar{A}_1 z + \bar{B}_1^u u + \bar{B}_1^w w + \bar{A}_2 p_z + \bar{B}_2^u p_u + \bar{B}_2^w p_w 
\end{equation}
where
\begin{align}
p_z & = \Delta q_z, \quad q_z = \bar{A}_3 z  , \label{eq:px} \\
p_u & = \Delta q_u, \quad q_u =     {B}_3^u u, \label{eq:pu} \\
p_w & = \Delta q_w, \quad q_w =     {B}_3^w w. \label{eq:pw} 
\end{align}
For barrier function $B = z ^ \top P z - 1$ with controller $u = K z$, the time derivative of $B$ is
\begin{equation}
\dot{B} 
= 
\text
{\tiny
$
\begin{bmatrix}
z \\
w \\
p_z \\
p_u \\
p_w \\
\end{bmatrix}
^ \top 
\begin{bmatrix}
(\bar{A}_1 + \bar{B}_1 ^ u K) ^ {  \top} P + P (\bar{A}_1 + \bar{B}_1 ^ u K) & \star & \star & \star & \star  \\
\bar{B}_1 ^ {w \top} P & \mathbf{0} &      \star &      \star &      \star \\
\bar{A}_2 ^ {  \top} P & \mathbf{0} & \mathbf{0} &      \star &      \star \\
\bar{B}_2 ^ {u \top} P & \mathbf{0} & \mathbf{0} & \mathbf{0} &      \star \\
\bar{B}_2 ^ {w \top} P & \mathbf{0} & \mathbf{0} & \mathbf{0} & \mathbf{0} \\
\end{bmatrix}
\begin{bmatrix}
z \\
w \\
p_z \\
p_u \\
p_w \\
\end{bmatrix}
$
}
.
\end{equation}
In addition, we have
\begin{align}
w ^ \top   w & \leq \bar{\mathit{w}} ^ 2, \label{eq:ww}  \\
z ^ \top P z & \leq \upvarepsilon_0 ^ 2,           \label{eq:zPz}
\end{align}
for the norm-bound human input $w$ and the residue set $\{ z \mid B \leq \upvarepsilon_0 ^ 2 - 1 \}$ of the barrier function.

Using the $\mathsf{S}$-procedure, we can combine \eqref{eq:px}, \eqref{eq:pu}, \eqref{eq:pw}, \eqref{eq:ww}, \eqref{eq:zPz}, and $\dot{B} \leq 0$ into
\begin{equation} \label{eq:appendix-1}
\begin{bmatrix}
\tilde{X}_{11} & \star          & \star \\
\tilde{X}_{21} & \tilde{X}_{22} & \star \\
\mathbf{0}     & \mathbf{0}     & - \upalpha \upvarepsilon_0 ^ 2 + \upalpha_w \bar{\mathit{w}} ^ 2
\end{bmatrix}
\preceq
0.
\end{equation}
where
\begin{align}
\tilde{X}_{11}
& =
\text{\small$
\begin{bmatrix}
(\bar{A}_1 + \bar{B}_1 ^ u K) ^ \top P + P (\bar{A}_1 + \bar{B}_1 ^ u K) + \upalpha P & P \bar{B}_1 ^ w \\
\bar{B}_1 ^ {w \top} P & - \upalpha_w \mathbf{I}
\end{bmatrix}
$} \notag
\\
& + 
\text{\small$
\begin{bmatrix}
\bar{A}_3 & \mathbf{0} \\
B_3 ^ u K & \mathbf{0} \\
\mathbf{0} & B_3 ^ w \\
\end{bmatrix}
^ \top 
\begin{bmatrix}
\uplambda_x \mathbf{I} & \mathbf{0} & \mathbf{0} \\
\mathbf{0} & \uplambda_u \mathbf{I} & \mathbf{0} \\
\mathbf{0} & \mathbf{0} & \uplambda_w \mathbf{I} \\
\end{bmatrix}
\begin{bmatrix}
\bar{A}_3 & \mathbf{0} \\
B_3 ^ u K & \mathbf{0} \\
\mathbf{0} & B_3 ^ w \\
\end{bmatrix}
$}
\\
\tilde{X}_{21} 
& =
\begin{bmatrix}
\bar{A}_2 ^ {  \top} P & \mathbf{0} \\
\bar{B}_2 ^ {u \top} P & \mathbf{0} \\
\bar{B}_2 ^ {w \top} P & \mathbf{0} \\
\end{bmatrix} 
\\
\tilde{X}_{22}
& =
\begin{bmatrix}
- \uplambda_x \mathbf{I} & \mathbf{0} & \mathbf{0} \\
\mathbf{0} & - \uplambda_u \mathbf{I} & \mathbf{0} \\
\mathbf{0} & \mathbf{0} & - \uplambda_w \mathbf{I} \\
\end{bmatrix}
\end{align}
Without loss of generality, we can let $\upalpha_w = \upalpha \frac{\upvarepsilon_0 ^ 2}{\bar{\mathit{w}} ^ 2}$ such that \eqref{eq:appendix-1} becomes
\begin{equation} \label{eq:appendix-2}
\begin{bmatrix}
\tilde{X}_{11} & \star          \\
\tilde{X}_{21} & \tilde{X}_{22} 
\end{bmatrix}
\preceq
0
\end{equation}
which is equivalent to \eqref{eq:stability} for $\upmu_x = \frac{1}{\uplambda_x}$, $\upmu_u = \frac{1}{\uplambda_u}$, $\upmu_w = \frac{1}{\uplambda_w}$, and $Q = P ^ {-1}$.

\balance

\subsection{Proof of Proposition 2} \label{proof:proposition}

Let us define a vector norm function
\begin{align} \label{eq:vector-norm}
\norm{\star}_{Q_1} & \triangleq \sqrt{\star ^ \top Q_1 ^ {-1} \star}
\end{align}
based on the quadratic part in the barrier function of $(Q_1, K_1)$. Because equilibrium $z_2$ of $(Q_2, K_2)$ is on hyper-surface of $E_1 (\upvarepsilon_1)$, we have
\begin{align} \label{eq:proposition-2-1}
\norm{z_2 - z_1}_{Q_1} = \upvarepsilon_1.
\end{align}
Suppose $z_1 ^ \prime$ is a point on the hyper-surface of $E_1 (1)$, we have
\begin{align} \label{eq:proposition-2-2}
\norm{z_1 ^ \prime - z_2}_{Q_1} + \norm{z_2 - z_1}_{Q_1} \geq \norm{z_1 ^ \prime - z_1}_{Q_1} = 1
\end{align}
because of the triangle inequality of $\norm{\star}_{Q_1}$.
Based on \eqref{eq:proposition-2-1} and \eqref{eq:proposition-2-2}, we have 
\begin{align} \label{eq:proposition-2-3}
\norm{z_1 ^ \prime - z_2}_{Q_1} \geq 1 - \upvarepsilon_1
\end{align}
which is equivalent to 
\begin{align} \label{eq:proposition-2-4}
\{ z \mid (z - z_2) ^ \top Q_1 ^ {-1} (z - z_2) \leq (1 - \upvarepsilon_1) ^ 2 \} \subseteq E_1 (1).
\end{align}
If $Q_1 \preceq \frac{(1 - \upvarepsilon_2) ^ 2}{\upvarepsilon_0 ^ 2} \cdot Q_2$, we have 
\begin{align} \label{eq:proposition-2-5}
E_2 (\upvarepsilon_0) \subseteq \{ z \mid (z - z_2) ^ \top Q_1 ^ {-1} (z - z_2) \leq (1 - \upvarepsilon_1) ^ 2 \}.
\end{align}
Combining \eqref{eq:proposition-2-4} and \eqref{eq:proposition-2-5}, we have $E_2 (\upvarepsilon_0) \subseteq E_1 (1)$. If the robot starts from any states in $E_2 (1)$, it safely converges to a subset in $E_1 (1)$ using barrier pair $(Q_2, K_2)$.

Similarly, we have $E_1 (\upvarepsilon_0) \subseteq E_2 (1)$ if $Q_2 \preceq \frac{(1 - \upvarepsilon_1) ^ 2}{\upvarepsilon_0 ^ 2} \cdot Q_1$.


\bibliographystyle{IEEEtran}
\bibliography{main}

\balance

\end{document}